\tikzset{
vertex/.style={circle,draw,black,align=center,inner sep=0cm, minimum size=0.5cm,fill=white,anchor=center},
agent/.style={vertex,fill={rgb:red,1;green,1;blue,1},text=white},
agent1/.style={agent,fill={rgb:red,0;green,1;blue,3}},
agent2/.style={agent,fill={rgb:red,3;green,1;blue,0}},
line/.style={black},
mdd-node/.style={rectangle,draw,black,align=center,inner sep=0.05cm,minimum height=0.4cm,fill=white,anchor=center,thin},
}
\newtheorem{proposition}{Proposition}
\newtheorem{corollary}{Corollary}
\title{\LARGE \bf
  Iterative Refinement for Real-Time Multi-Robot Path Planning
}
\author{
  Keisuke Okumura$^{1}$,
  Yasumasa Tamura$^{1}$ and
  Xavier D\'{e}fago$^{1}$
  \thanks{$^{1}$The authors are with School of Computing, Tokyo Institute of Technology, Tokyo, Japan.
        {\tt\small \{okumura.k, tamura.y, defago.x\}@coord.c.titech.ac.jp}}%
}
\begin{document}

\maketitle
\thispagestyle{empty}
\pagestyle{empty}

\begin{abstract}
  We study the iterative refinement of path planning for multiple robots, known as multi-agent pathfinding (MAPF).
  Given a graph, agents, their initial locations, and destinations, a solution of MAPF is a set of paths without collisions.
  Iterative refinement for MAPF is desirable for three reasons:
  1)~optimization is intractable,
  2)~sub-optimal solutions can be obtained instantly, and
  3)~it is anytime planning, desired in online scenarios where time for deliberation is limited.
  Despite the high demand, this is under-explored in MAPF because finding good neighborhoods has been unclear so far.
  Our proposal uses a sub-optimal MAPF solver to obtain an initial solution quickly, then iterates the two procedures: 1)~select a subset of agents, 2)~use an optimal MAPF solver to refine paths of selected agents while keeping other paths unchanged.
  Since the optimal solvers are used on small instances of the problem, this scheme yields efficient-enough solutions rapidly while providing high scalability.
  We also present reasonable candidates on how to select a subset of agents.
  Evaluations in various scenarios show that the proposal is promising; the convergence is fast, scalable, and with reasonable quality.
\end{abstract}
\section{Introduction}
Path planning for multiple robots is a fundamental problem in multi-robot coordination.
The objective is to assign each robot on a graph with a collision-free path to its destination.
This problem is known under various names: multi-robot path planning, cooperative path finding, or \emph{multi-agent path finding (MAPF)}~\cite{stern2019def}.
Hereafter, this paper calls it MAPF, where robots are represented as agents moving on a graph.

Applications of MAPF are inherently real-time systems with a limited time for planning,
e.g., automated warehouse~\cite{wurman2008coordinating}, intersection management~\cite{dresner2008multiagent}, airport surface operation~\cite{morris2016planning}, automated parking~\cite{okoso2019multi}, and video games~\cite{silver2005cooperative}.
It is critical to obtain feasible solutions with sufficient quality before deadlines.

Contrary to the importance of efficient path planning, optimization is intractable.
MAPF is known to be an NP-hard problem for various optimization criteria~\cite{yu2013structure}.
This is still true when restricting fields in grid structures~\cite{banfi2017intractability}, or approximating within any constant factor less than 4/3~\cite{ma2016multi}.
Even with state-of-the-art optimal algorithms, planning with hundreds of agents is still challenging~\cite{lam2020new}.

On the other hand, we can create sub-optimal solutions in a very short time~\cite{surynek2009novel,wang2011mapp,luna2011push,de2013push,okumura2019priority}.
Although they often ignore solution quality, having feasible solutions is clearly better than no solution as a result of waiting for optimal solvers, which are not guaranteed to get ones within the deadlines.

With feasible solutions obtained quickly, we can use the remaining time until the deadlines for \emph{iterative refinement}.
This is the basic motivation of \emph{anytime algorithms}~\cite{zilberstein1996using}, which can yield a feasible solution whenever interrupted, the quality of which improving as time passes. 

Iterative refinement is also fruitful for online situations where goals are allocated dynamically~\cite{ma2017lifelong,vsvancara2019online}.
The challenging part here is replanning.
Two intuitive approaches exist: 1)~replan all paths, or 2)~replan a single path for one robot with a new goal while keeping the others unchanged.
The first approach may return efficient solutions but it is costly and typically inappropriate for online use.
The second approach may return inefficient solutions but is nearly costless.
We can apply the second approach for an initial solution that we can gradually refine within the time constraints.

Iterative refinement is promising though under-explored in MAPF because, so far, it was unclear how to incrementally improve a known solution.
In the context of local search, this corresponds to finding a good neighborhood solution.
Hence, we aim to propose an \emph{anytime} framework of \emph{iterative refinement for MAPF}.


\paragraph*{Related Work}
Standley and Korf~\cite{standley2011complete} extended their previous work of optimal MAPF algorithm~\cite{standley2010finding} and developed an anytime version.
Cohen \etal~\cite{cohen2018anytime} studied an anytime algorithm based on a variant of \astar and applied it to MAPF. 
X$^\ast$~\cite{vedder2021x} is an anytime MAPF solver assuming sparse scenarios, i.e., agent distributed sparsely in fields and where potential for collisions is rare.
These three methods search for non-optimal solutions by relaxing some constraints, then eventually converge to optimal solutions by iteratively tightening the constraints.
A drawback is that they are each tied to a specific solver, and that they may fail to obtain initial solutions in a reasonable time thus returning nothing.
Surynek~\cite{surynek2013redundancy} studied local repairing rules for a pebble motion on graphs; which can be adapted to MAPF.
Since improvements are done by ad-hoc local changes, redundancies of \apriori unknown patterns remain in the solution.
%



%

\paragraph*{Contributions}
We propose a generic framework to provide anytime MAPF based on an effective combination of existing solvers.
Our framework first uses a sub-optimal MAPF solver to very quickly obtain an initial feasible solution,
then, it uses an optimal MAPF solver to find good neighborhood solutions.
Precisely, the framework refines the solution iteratively by selecting a subset of agents and using an optimal solver to refine their paths while keeping other paths fixed.
Although the refinement process uses an optimal solver, each refinement are completed quickly because it solves a sub-problem whose size depends on the number of selected agents, typically much smaller than the original.
We also present reasonable candidates on how to select a subset of agents.

We study the effectiveness of the approach in various benchmarks, and observe empirically that the framework converges almost optimally within a short time in small instances, and remains responsive even for very large instances (i.e., large environments and/or many agents). In other words, it brings many practical advantages over prior art.

In a wider view, our study can also be seen as solving a very large-scale neighborhood search~\cite{ahuja2002survey}.
Closer to our concept, Balyo \etal~\cite{balyo2012shortening} studied local replanning for domain-independent planning problems to optimize makespan. It repeats the following; create a sub-problem, obtain an optimal sub-solution by SAT-based techniques, and replace the part of the original solution with a new one.

\paragraph*{Paper Organization}
Section~\ref{sec:preliminaries} provides a formal definition of MAPF and reviews several MAPF solvers that we use.
Section~\ref{sec:ir} describes the framework including basic theoretical analysis.
Section~\ref{sec:selection} presents construction rules of a subset of agents.
Section~\ref{sec:exp} evaluates the proposal in MAPF benchmarks.
Section~\ref{sec:discussion} concludes the paper. 

\section{Preliminaries}
\label{sec:preliminaries}
\subsection{Problem Definition (MAPF)}
The \emph{MAPF problem} is defined as follows.
Consider a set of agents $A = \{a_1, \dots, a_n\}$ evolving in an environment represented as a connected and undirected graph $G = (V, E)$.
Let $\loc{i}{t} \in V$ denote the location of an agent $a_i$ at discrete time~$t \in \mathbb{N}$.
At each timestep $t$, $a_i$ can move to an adjacent node, or stay at its current location, i.e., $\loc{i}{t+1} \in \{ v\;|\;(\loc{i}{t}, v) \in E\} \cup \{ \loc{i}{t} \}$.
Agents must avoid two types of conflicts: $\loc{i}{t} \neq \loc{j}{t}$ and $\loc{i}{t} \neq \loc{j}{t+1} \lor \loc{i}{t+1} \neq \loc{j}{t}$.
Given a distinct initial location \loc{i}{0} and a distinct goal $g_i \in V$ for each agent $a_i$, a solution $\paths = (\path{1}, \dots, \path{n})$ is a collection of paths (one for each agent) where $\path{i} = (\loc{i}{0}, \loc{i}{1}, \dots, \loc{i}{T})$ such that $\loc{i}{T} = g_i$.

To evaluate solution quality, we use the \emph{sum-of-costs} metric: $\sum_{a_i \in A}T_i$ where $T_i$ is the minimum timestep such that $\loc{i}{T_i}=\loc{i}{T_i+1}=\ldots=\loc{i}{T}$.
This is a commonly-used objective in MAPF studies~\cite{stern2019def}.
Whenever obvious from context, we simply refer to sum-of-costs as ``cost.''

We further use the following terms.
\dist{u}{v} is the shortest path length between two nodes $u, v \in V$.
\costone{i} is the cost of an agent $a_i$ in a solution \paths, i.e., $T_i$.

\subsection{MAPF Solvers}
This part explains several MAPF solvers that we will use later.
Numerous solvers have been developed so far, and they
can be categorized as: optimal or sub-optimal; complete or incomplete; search-based, prioritized planning, or rule-based.
See~\cite{felner2017search,stern2019def} for comprehensive reviews.
In our experiments, we used ECBS, (W)\hca, RPP, PS, and PIBT to obtain initial sub-optimal solutions and then used ICBS to refine solutions.
We also used CBSH and AFS as a comparison.

Conflict-based Search (CBS)~\cite{sharon2015conflict}, a popular optimal and complete MAPF solver, uses a two-level search.
The high-level search manages conflicts between agents.
When a conflict occurs between two agents at some time and location, two possible resolutions are depending on which agent gets to use the location at that time.
Following this observation, CBS constructs a binary tree where each node includes constraints prohibiting to use space-time pairs for certain agents.
In the low-level search, agents find a single path that is constrained by the corresponding high-level node.

Many studies enhance CBS.
Improved-CBS (ICBS)~\cite{boyarski2015icbs} prioritizes conflicts when splitting a high-level node with several conflicts.
CBSH~\cite{felner2018adding} adds admissible heuristic for high-level nodes.
Improved heuristics have been proposed~\cite{li2019improved}.

Enhanced CBS (ECBS)~\cite{barer2014suboptimal}, a variant of CBS, is a complete and bounded suboptimal solver, i.e., a returned solution is within a given sub-optimality bound.
Instead of best-first search, ECBS uses focal search in both high- and low-level searches.
Focal search~\cite{pearl1982studies}, a variant of \astar, allows exploring efficient nodes not belonging to optimal solutions, e.g., in CBS, high-level nodes with few conflicts.

Anytime Focal Search (AFS)~\cite{cohen2018anytime}, an anytime version of the focal search, iteratively refines a solution with guaranteed solution quality.
Given enough time, AFS finally converges to an optimal solution.
Cohen \etal~\cite{cohen2018anytime} applied AFS to the high-level search of CBS and realized an anytime MAPF solver.

Different from \emph{search-based} approaches explained so far, Hierarchical Cooperative \astar (\hca)~\cite{silver2005cooperative}, neither complete nor optimal, takes a decoupled approach.
\hca is a typical example of \emph{prioritized planning}, i.e., it plans paths for agents sequentially while avoiding conflicts with previously planned paths.
During the construction of prioritized paths, \hca uses the shortest path length between two locations ignoring collisions as heuristics.
Windowed \hca (\whca)~\cite{silver2005cooperative} is a variant of \hca, which uses a limited lookahead window.
In general, prioritized planning is fast, scalable, and practical with acceptable costs (e.g., \cite{silver2005cooperative,vcap2015prioritized,ma2019searching}).

{\v{C}}{\'a}p \etal~\cite{vcap2015prioritized} analyzed a sufficient condition that sequential conflict-free paths are constructed, and proposed Revisit Prioritized Planning (RPP) that agents plan paths while avoiding initial locations of all lower priority agents.
RPP is complete for \emph{well-formed} instances; for each pair of start and goal, a path exists that traverses no other starts and goals.
Note that well-informed instances are hard to realize in dense scenarios.

Push and Swap/Rotate (PS)~\cite{luna2011push,de2013push} is complete, sub-optimal, and an example of \emph{rule-based} approaches.
PS relies on two primitives: \emph{push} to move an agent towards its goal, and \emph{swap} to allow two agents to swap their locations without altering the configuration of other agents.
It only allows a single agent or a pair of agents to move at each time.
In general, rule-based approaches (e.g., \cite{luna2011push,surynek2009novel,wang2011mapp,de2013push}) are the fastest class to obtain feasible solutions but their quality is overlooked.

Priority Inheritance with Backtracking (PIBT)~\cite{okumura2019priority}, incorporating both prioritized planning and rule-based, plans locations of all agents only for the next timestep and repeats this to obtain sub-optimal solutions.
It ensures that all agents reach their goals, but agents might not be on their goals simultaneously; hence it is incomplete for one-shot MAPF.

\section{Iterative Refinement}
\label{sec:ir}
The framework first obtains an initial solution by a sub-optimal MAPF solver, and then iteratively refines selected parts of the solutions, the paths of a selected subset of the agents,  using an optimal MAPF solver.
We show the pseudocode in Algorithm~\ref{algo:framework}.
{
  \begin{algorithm}
    \small
    \caption{The Framework of Iterative Refinement}
    \label{algo:framework}
    \textbf{Input}:~MAPF instance\\
    \textbf{Output}:~solution $\paths$ or $\textbf{FAILURE}$\\
    \vspace{-0.4cm}
    \begin{algorithmic}[1]
      \STATE $\paths \leftarrow$ initial solution obtained by an MAPF solver
      \label{algo:framework:init}
      \IFSINGLE{failed to obtain \paths}{\textbf{return} \textbf{FAILURE}}
      \label{algo:framework:failure}
      \WHILE{not interrupted}
      \label{algo:framework:while}
      \STATE Create a modification set $M \subseteq A$ using \paths
      \label{algo:framework:modif}
      \STATE $\paths \leftarrow \begin{aligned}[t]
        &\text{refined MAPF solution for $M$}
        \\[-0.1cm]
        &\text{while fixing the others' paths in \paths}
      \end{aligned}$
      \label{algo:framework:refine}
      \ENDWHILE
      \label{algo:framework:end-while}
      \RETURN \paths
      \label{algo:framework:return}
    \end{algorithmic}
  \end{algorithm}
}

An initial feasible solution is quickly obtained by a sub-optimal solver~[Line~\ref{algo:framework:init}].
We refer to the used sub-optimal MAPF solver as an \emph{initial solver}.
If the initial solver failed to obtain solutions, the framework ends with a failure~[Line~\ref{algo:framework:failure}];
otherwise, the refinement starts~[Lines~\ref{algo:framework:while}--\ref{algo:framework:end-while}].
The refinement iteratively follows two procedures until interrupted:
1)~Create a modification set $M \subseteq A$ using the current solution \paths~[Line~\ref{algo:framework:modif}].
2)~Refine the current solution \paths by changing paths for agents in $M$~[Line~\ref{algo:framework:refine}] using an optimal MAPF solver.
We call this solver a \emph{refinement solver}.
The refinement solver only changes the paths for agents in $M$; paths for agents not in $M$ are unchanged.
The refinement continues until interrupted, e.g., timeout, reaching the predetermined iteration number, when no improvement is expected, interruption by users, etc.
The framework eventually returns the final solution~[Line~\ref{algo:framework:return}].

The initial solver can be any sub-optimal MAPF solver, as long as it provides feasible solutions.
As the refinement solver, it is desirable to use versions adapted from an optimal solver.
The adaptation is simple; let it plan paths for agents in $M$ regarding the others as dynamic obstacles.
E.g., for CBS, solve MAPF only for agents in $M$ while prohibiting the low-level search to use all space-time pairs used by agents outside of $M$.
In a precise sense, the refinement solver is not limited to optimal MAPF solvers.
The requirement is that the refined solution never worsens from the original.
Considering that cost of paths for agents outside of $M$ does not change, the requirement is that cost of paths for agents in $M$ is non-increasing before and after refinement.

The next property is obvious from the requirements set on the refinement solver.
\begin{corollary}[Monotonicity]
  For each iteration in Algorithm~\ref{algo:framework}, the solution cost is non-increasing.
  \label{cor:monotonicity}
\end{corollary}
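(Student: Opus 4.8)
The plan is to exploit the additive, agent-wise structure of the sum-of-costs objective together with the non-worsening guarantee imposed on the refinement solver. I would fix a single pass through the while loop (Lines~\ref{algo:framework:while}--\ref{algo:framework:end-while}), denote by $\paths$ the solution entering the iteration and by $\paths'$ the solution produced on Line~\ref{algo:framework:refine}, and write $T_i$ and $T_i'$ for the per-agent costs of $a_i$ under $\paths$ and $\paths'$, respectively. Since the objective is $\sum_{a_i \in A} T_i$, the first step is simply to split this sum into the part coming from the modification set $M$ and the part coming from $A \setminus M$.

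The second step is to dispose of the agents outside $M$. By construction, the refinement solver on Line~\ref{algo:framework:refine} alters only the paths of agents in $M$ and leaves every path of $A \setminus M$ unchanged, so $T_i' = T_i$ for all $a_i \in A \setminus M$. It therefore remains only to compare $\sum_{a_i \in M} T_i'$ with $\sum_{a_i \in M} T_i$. For this I would invoke the property stated just before the corollary: the restriction of $\paths$ to the agents in $M$ is already a feasible solution of the sub-problem handed to the refinement solver, namely routing the agents of $M$ on $G$ while treating the frozen paths of $A \setminus M$ as dynamic obstacles. Feasibility holds because $\paths$ is a valid MAPF solution, so its paths for $M$ conflict neither with one another nor with the paths of $A \setminus M$ that are now regarded as obstacles.

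Given this feasible fallback, an optimal refinement solver (or, more generally, any solver meeting the stated non-worsening requirement) returns paths for $M$ whose total cost satisfies $\sum_{a_i \in M} T_i' \le \sum_{a_i \in M} T_i$. Combining with the unchanged contribution of $A \setminus M$ yields $\sum_{a_i \in A} T_i' \le \sum_{a_i \in A} T_i$, i.e. the cost does not increase over the iteration; iterating this inequality across all passes of the loop gives the claim. The one point that needs care---conceptual rather than computational---is the feasibility argument of the second step: one must be sure that carving $M$ out of the instance introduces no new conflict, which is exactly what the validity of $\paths$ and the freezing of $A \setminus M$ guarantee. No hard estimate is involved; the substance is the clean decomposition plus the observation that the incumbent paths remain a valid option for the refinement solver.
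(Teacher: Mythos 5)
Your proof is correct and follows essentially the same route as the paper, which justifies the corollary via the decomposition of sum-of-costs into the contributions of $M$ (non-increasing by the stated requirement on the refinement solver) and $A \setminus M$ (unchanged). Your added observation that the incumbent paths for $M$ remain a feasible solution of the sub-problem is the right way to see why an optimal refinement solver meets that requirement, and it matches the paper's intent.
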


A key point is that the refinement solver recomputes the paths for a selected subset $M$ of agents, rather than for the entire set $A$ of all agents.
Compared to solving the original problem directly using optimal solvers, the problem solved at each iteration by the refinement solver is significantly smaller, ensuring that the framework is scalable even to a large number of agents.

\subsection{Early Stop}
\label{subsec:early-stop}
Even though sub-problems solved by the refinement solver are small compared to the original problem, the refinement may still take too long if $|M|$ is too big.
In such cases, it is preferable to abort the current refinement by returning the current solution, and then start a new iteration with a new set $M$.
The criteria can be a timeout or using a threshold value for the size of a search tree in the refinement solver.

\subsection{Limitations}
As a limitation, the framework may have the local minimum with no sub-optimality bounds from the optimal.
\begin{proposition}[No sub-optimality bounds]
  Consider the optimal cost $c^\ast$.
  In Algorithm~\ref{algo:framework}, there is no $w \geq 1$ such that always $c \leq wc^\ast$ unless selecting $A$ itself as a modification set $M$, where $c$ is the solution cost in each iteration.
  \label{prop:no-bound}
\end{proposition}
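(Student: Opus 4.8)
The plan is to prove this impossibility result by exhibiting, for every candidate bound $w \geq 1$, a single instance together with a feasible solution that the framework can reach yet never escapes, whose cost exceeds $w c^\ast$. Since the statement is a universal claim over all $w$, it suffices to construct a family of instances whose refinement-stable cost ratio $c / c^\ast$ grows without bound.

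First I would fix the gadget: let $G$ be a cycle $C_k$ on nodes $0, 1, \dots, k-1$, place exactly one agent on each node, and set the goal of the agent starting at node $i$ to its clockwise neighbour $(i+1) \bmod k$. The optimal solution is the single synchronized clockwise rotation, in which every agent reaches its goal in one step, so $c^\ast = k$. In contrast, the synchronized counterclockwise rotation is also feasible but drives each agent all the way around the ring, giving cost $k(k-1)$. I would take this counterclockwise solution to be the initial solution returned by the sub-optimal solver (any solver producing it is admissible as an initial solver), so that it is the solution present at the first refinement iteration.

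The heart of the argument, and the step I expect to be the main obstacle, is to show that this counterclockwise solution is a local minimum with respect to \emph{every} proper modification set $M \subsetneq A$: refining $M$ while fixing the paths of the agents in $A \setminus M$ cannot lower the cost. The key structural fact is that the ring is fully occupied at every timestep, so the only collision-free global motions are synchronized rotations; and because at least one agent $a_j \notin M$ stays pinned to its counterclockwise trajectory --- sweeping through every node over time like a moving obstacle --- the agents in $M$ cannot reverse to the cheaper clockwise direction without incurring a head-on vertex or swap conflict with $a_j$. I would make this precise by tracing the forced chain of vacated nodes: at each step the fixed agents free only the node consistent with continued counterclockwise rotation, so every agent in $M$ is forced (possibly after waiting, which only increases cost) back onto its original long trajectory. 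Hence no refinement over a proper subset decreases any $\costone{i}$, and by Corollary~\ref{cor:monotonicity} the cost remains $k(k-1)$ across all iterations. The care needed here is to rule out all subtler behaviours --- partial motions, stalling, or subsets of size $k-1$ --- rather than treating only single-agent refinements.

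Finally I would combine the two bounds: the reachable, refinement-stable solution has cost $c = k(k-1)$ while $c^\ast = k$, so $c / c^\ast = k - 1$. Given any $w \geq 1$, choosing $k > w + 1$ yields an instance on which the framework is permanently trapped at $c > w c^\ast$ unless it selects $M = A$. Therefore no finite $w$ can satisfy $c \leq w c^\ast$ on all instances, which is exactly the claim.
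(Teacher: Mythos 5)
Your proof is correct, and it establishes the proposition by the same overall strategy as the paper --- exhibit a family of instances with a reachable, refinement-stable solution whose cost ratio to the optimum grows without bound --- but with a genuinely different gadget and a different key lemma. The paper uses the two-agent instance of Fig.~\ref{fig:local-minimum}: $a_2$ sits on $a_1$'s only short route, so neither singleton modification set can improve the solution, and the trapped cost $k+1$ grows while $c^\ast = 6$ stays fixed; since $|A|=2$, checking all proper subsets is immediate. You instead use a fully occupied cycle $C_k$, and the burden of your argument shifts to the forced-chain lemma: with one agent pinned to the full counterclockwise rotation, the only bijective, swap-free single-step motion of a saturated cycle consistent with that agent is the synchronized counterclockwise rotation, so every proper $M$ --- even $|M| = k-1$ --- is trapped. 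That lemma is correctly argued (the vacated-node chain propagates around the ring and rules out waiting and partial motions), and together with Corollary~\ref{cor:monotonicity} it gives a stuck ratio of $k-1$. What your version buys is a stronger qualitative point: the local minimum survives even when all but a single agent are re-planned, whereas the paper's example only directly witnesses failure for singleton modification sets. What it costs is that the verification is heavier, and both $c$ and $c^\ast$ grow with $k$ (still fine, since the ratio diverges). One presentational caveat: your claim that the framework ``can reach'' this solution rests on an adversarial choice of initial solver; the paper makes the same assumption, and it is legitimate because the proposition quantifies over all runs of Algorithm~\ref{algo:framework}, but it is worth stating explicitly that the bound is being refuted in the worst case over admissible initial solvers.
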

\begin{proof}
  Consider an example in Fig.~\ref{fig:local-minimum}.
  Assume that an initial solution assigns $a_1$ to a clockwise path (cost: $k$) and $a_2$ to a counterclockwise path (cost: $1$).
  With $k \geq 6$, this is not optimal because $a_1$ can take a counterclockwise path if $a_2$ temporally moves over from its goal (sum-of-costs: $6$).
  Unless $M \neq A$, the solution of the refinement is unchanged.
  Assume $w \geq 1$ such that $c \leq wc^\ast$.
  We can take an arbitrary $k$, contradicting the existence of $w$.
\end{proof}

\begin{figure}[ht!]
  \centering
  \begin{tikzpicture}
    \scriptsize
    \node[vertex](v1) at (0.6, 0.0) {};
    \node[vertex](v2) at (0.0, 0.2) {};
    \node[agent2](v3) at (0.0, 1.0) {$a_2$};
    \node[agent1](v4) at (0.6, 1.2) {$a_1$};
    \node[vertex](v5) at (-0.6, 0.0) {};
    \foreach \u / \v in {v1/v2,v2/v3,v3/v4,v5/v2}
    \draw[line] (\u) -- (\v);
    \node[label=right:{length: $k$}](label-len) at (2.0, 0.6) {};
    \draw[line, dashed] (v1.east) to[out=0,in=270] (label-len) to[out=90,in=0] (v4.east);
    \draw[->,very thick] (v3) to[out=240,in=120] (v2);
    \draw[->,very thick] (v4) -- (v1);
  \end{tikzpicture}
  \vspace{-0.2cm}
  \caption{
    \textbf{Example of a local minimum.}
    Goals are depicted by arrows.
  }
  \label{fig:local-minimum}
\end{figure}
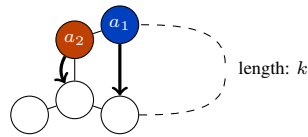

\begin{corollary}[Existence of local minimum]
  Depending on initial solutions, it may be impossible to reach the optimal solution unless selecting $A$ itself as $M$.
  \label{prop:local-minimum}
\end{corollary}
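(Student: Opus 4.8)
The plan is to treat this as an immediate consequence of the construction already used in Proposition~\ref{prop:no-bound}, and to make the ``trapped'' behaviour explicit by examining every proper modification set. I would reuse the same instance depicted in Fig.~\ref{fig:local-minimum}: the ring on which the initial solution \paths{} routes $a_1$ clockwise at cost $k$ and $a_2$ counterclockwise at cost $1$, which for $k \geq 6$ is strictly sub-optimal. The goal is to show that from this particular initial \paths, no refinement with $M \neq A$ ever moves away from it.

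First I would argue that for this fixed \paths, every refinement with a proper modification set $M \subsetneq A$ returns \emph{exactly} \paths, not merely a solution of equal cost. There are only three proper subsets to check. For $M = \emptyset$ nothing is replanned. For $M = \{a_2\}$ the path of $a_1$ is frozen, and $a_2$ already follows a shortest path of cost $1$, so its current path is the optimal sub-solution. For $M = \{a_1\}$ the path of $a_2$ is frozen and occupies the short counterclockwise arc together with its goal, so the only collision-free route left for $a_1$ is the clockwise one of cost $k$; again the refinement reproduces the current path.

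The point I would emphasise---stronger than the cost non-increase given by Corollary~\ref{cor:monotonicity}---is that \paths{} is a genuine \emph{fixed point} of the refinement step for all $M \neq A$. Because each proper-subset refinement leaves \paths{} literally unchanged, no sequence of such iterations can ever displace the state, so alternating among different proper subsets cannot help either. Consequently the optimal configuration, which requires $a_2$ to temporarily vacate its goal so that $a_1$ can take the counterclockwise arc, becomes reachable only when both agents are replanned together, i.e. when $M = A$. This exhibits an initial solution from which the optimum is unreachable unless $M = A$, which is precisely the claim.

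I do not expect a hard mathematical obstacle here, since the statement is essentially a rephrasing of Proposition~\ref{prop:no-bound}. The only step requiring care is the one above: guarding against the possibility that a clever alternating schedule of proper subsets escapes the basin even though no single step lowers the cost. This is ruled out by the observation that each proper-subset refinement returns the \emph{identical} solution---a fixed point---rather than only a cost-equivalent one, so the iteration is provably stuck.
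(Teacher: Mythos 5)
Your proposal is correct and follows essentially the same route as the paper: the corollary is drawn from the same Fig.~\ref{fig:local-minimum} instance used in the proof of Proposition~\ref{prop:no-bound}, whose key observation is precisely that the refinement leaves the solution unchanged whenever $M \neq A$. Your explicit enumeration of the proper subsets $\emptyset$, $\{a_1\}$, $\{a_2\}$ and the fixed-point argument (ruling out escape via alternating proper subsets) only makes rigorous what the paper states in one line, so no further changes are needed.
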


Note that when $M = A$ the refinement solver has to solve the original MAPF problem.

\section{Design of Modification Set}
\label{sec:selection}
The modification set is an important component of the framework, and its design will affect the performance such as computation time and solution quality.
This section defines several \emph{selection rules} to provide reasonable candidates.

\subsection{Random}
\label{subsec:random}
One naive approach is to pick a subset of agents randomly.
The size of a modification set $M$ is then a user-specified parameter.
Note that large $|M|$ has a chance to reduce costs largely in one iteration but takes time for the refinement because sub-problems become challenging.

\subsection{Single Agent}
\label{subsec:single-agent}
This rule always picks a single agent as $M$;\footnotemark can be regarded as a special case of the previous rule (\textit{random}).
Even with a single agent, the cost might be reduced by the refinement.
In this case, the refinement becomes just a single-agent path finding problem and can be computed efficiently without MAPF solvers, e.g., by \astar.

\subsection{Focusing at Goals}
\label{subsec:focus-at-goal}
Consider an example in Fig.~\ref{fig:local-repair}.
Assume that the current solution is $\path{1}=(v_2, v_3, \underline{v_6}, v_3, v_3)$ and $\path{2}=(v_1, v_2, \underline{v_3}, v_4, v_5)$.
An agent $a_1$ cannot achieve a shorter path because an agent $a_2$ uses a goal of $a_1$ (i.e., $v_3$) at a timestep~$2$.
In general, for $a_i$, one reason of a gap between ideal cost \distone{i} and real cost \costone{i} may be that another agent $a_j$ uses a goal for $a_i$ (i.e., $g_i$) at a timestep $t \geq \distone{i}$.
At least before $t$, $a_i$ cannot arrive at $g_i$ and remain there.
In this case, it is required to jointly refine paths of $a_i$ and $a_j$.

This observation motivates to create a following simple rule taking a current solution $\paths$ and one agent $a_i$ as input.
\begin{align*}
  M \leftarrow \{ a_j | \loc{j}{t} = g_i, \distone{i} \leq t \leq \costone{i} \}
\end{align*}
The selecting rule of $a_i$ is arbitrary.\footnotemark[\value{footnote}]

{
  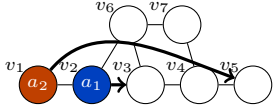
\begin{figure}[ht!]
    \centering
    \begin{tikzpicture}
      \scriptsize
      {
        \newcommand{\edgesize}{0.2cm}
        \newcommand{\labelpos}{-0.15cm}
        \node[agent2](v1) at (0, 0) {$a_2$};
        \node[agent1,right=\edgesize of v1](v2) {$a_1$};
        \node[vertex,right=\edgesize of v2](v3) {};
        \node[vertex,right=\edgesize of v3](v4) {};
        \node[vertex,right=\edgesize of v4](v5) {};
        \node[vertex](v6) at (1.2,0.8) {};
        \node[vertex,right=\edgesize of v6](v7) {};
        \node[above left=\labelpos of v1]{$v_1$};
        \node[above left=\labelpos of v2]{$v_2$};
        \node[above left=\labelpos of v3]{$v_3$};
        \node[above left=\labelpos of v4]{$v_4$};
        \node[above left=\labelpos of v5]{$v_5$};
        \node[above left=\labelpos of v6]{$v_6$};
        \node[above left=\labelpos of v7]{$v_7$};

        \foreach \u / \v in {v1/v2,v2/v3,v3/v4,v4/v5,v2/v6,v6/v7,v7/v4,v3/v6}
        \draw[line] (\u) -- (\v);
        \draw[->,very thick] (v2) to[out=0,in=180] (v3);
        \draw[->,very thick] (v1) to[out=50,in=160] (v5);
      }
    \end{tikzpicture}
    \vspace{-0.2cm}
    \caption{
      \textbf{Example of local repiar around goals.}
    }
    \label{fig:local-repair}
  \end{figure}
}

\subsection{Local Repair around Goals}
\label{subsec:local-repair}
This is a special case of the previous rule (\textit{focusing-at-goals}).
Assume again the example in Fig.~\ref{fig:local-repair}; $\path{1}=(v_2, v_3, \underline{v_6}, v_3, v_3)$ and $\path{2}=(v_1, v_2, v_3, v_4, v_5)$.
In \textit{focusing-at-goals}, $M$ for $a_1$ is $\{ a_1, a_2 \}$, therefore, the refinement solver has to solve a sub-problem with two agents; however, this effort can be reduced.
Consider obtaining a better path for $a_1$ ignoring \path{2}.
In this example, a new path is obtained without searching by \emph{local repair around the goal}; $(v_2, v_3, \underline{v_3}, v_3, v_3)$.
Next, compute a single path for $a_2$ while avoiding collisions with this new path and the other agents' paths.
If the sum of costs of two new paths is smaller than the original, replace \path{1} and \path{2} with the new paths.
Since search effort is reduced, the refinement is expected to finish faster.

In general, when $\path{i}=(\ldots, g_i, v, g_i, \ldots, g_i)$ where $v \neq g_i$ and another agent $a_j$ uses $g_i$ at that timestep, this rule can be applied.

\subsection{Using MDD}
\label{subsec:using-mdd}
Given a single path cost $c$, a set of paths from \loc{i}{0} to $g_i$ can be compactly represented as a \emph{multi-valued decision diagram (MDD)}~\cite{srinivasan1990algorithms}; a directed acyclic graph where a vertex is a pair of a location $v \in V$ and a timestep $t \in \mathbb{N}$.
Each vertex in an MDD satisfies two conditions:
1)~a reachable location at that timestep from a start and
2)~a reachable location to a goal from that timestep.
Let \mdd{c}{i} be an MDD for $a_i$ with a cost $c$.
Fig.~\ref{fig:mdd} shows two examples: \mdd{2}{1} and \mdd{3}{1}.
MDDs are used often in MAPF solvers~\cite{sharon2013increasing,boyarski2015icbs}.

{
  \newcommand{\mddedgesize}{0.65cm}
  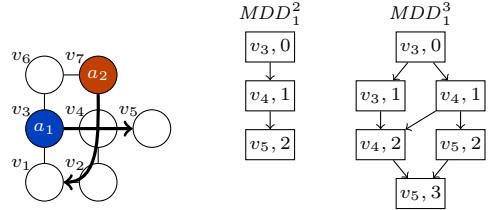
\begin{figure}[t]
    \centering
    \begin{tikzpicture}
      \scriptsize
      {
        \newcommand{\edgesize}{0.2cm}
        \newcommand{\labelpos}{-0.15cm}
        \node[vertex](v1) at (0, 0) {};
        \node[vertex,right=\edgesize of v1](v2) {};
        \node[agent1,above=\edgesize of v1](v4) {$a_1$};
        \node[vertex,right=\edgesize of v4](v5) {};
        \node[vertex,right=\edgesize of v5](v6) {};
        \node[vertex,above=\edgesize of v4](v7) {};
        \node[agent2,right=\edgesize of v7](v8) {$a_2$};
        \node[above left=\labelpos of v1]{$v_1$};
        \node[above left=\labelpos of v2]{$v_2$};
        \node[above left=\labelpos of v4]{$v_3$};
        \node[above left=\labelpos of v5]{$v_4$};
        \node[above left=\labelpos of v6]{$v_5$};
        \node[above left=\labelpos of v7]{$v_6$};
        \node[above left=\labelpos of v8]{$v_7$};
        \foreach \u / \v in {v1/v2,v4/v5,v5/v6,v7/v8,v1/v4,v2/v5,v4/v7,v5/v8}
        \draw[line] (\u) -- (\v);
        \draw[->,very thick] (v4) to[out=0,in=180] (v6);
        \draw[->,very thick] (v8) to[out=270,in=0] (v1);
      }
      {
        \coordinate[](t0) at (3, 1.8);
        \coordinate[below=\mddedgesize of t0](t1);
        \coordinate[below=\mddedgesize of t1](t2);
        \node[mdd-node,label=above:{\mdd{2}{1}}](mdd2_3_0) at (t0) {$v_3, 0$};
        \node[mdd-node](mdd2_4_1) at (t1) {$v_4, 1$};
        \node[mdd-node](mdd2_5_2) at (t2) {$v_5, 2$};
        \foreach \u / \v in {mdd2_3_0/mdd2_4_1,mdd2_4_1/mdd2_5_2}
        \draw[line,->] (\u) -- (\v);
      }
      {
        \coordinate[](t0) at (5, 1.8);
        \coordinate[below=\mddedgesize of t0](t1);
        \coordinate[below=\mddedgesize of t1](t2);
        \coordinate[below=\mddedgesize of t2](t3);
        \node[mdd-node,label=above:{\mdd{3}{1}}](mdd3_3_0) at (t0) {$v_3, 0$};
        \node[mdd-node,left=0.2cm of t1](mdd3_3_1) {$v_3, 1$};
        \node[mdd-node,right=0.2cm of t1](mdd3_4_1) {$v_4, 1$};
        \node[mdd-node,left=0.2cm of t2](mdd3_4_2) {$v_4, 2$};
        \node[mdd-node,right=0.2cm of t2](mdd3_5_2) {$v_5, 2$};
        \node[mdd-node](mdd3_5_3) at (t3) {$v_5,3$};
        \foreach \u / \v in {mdd3_3_0/mdd3_4_1,mdd3_3_0/mdd3_3_1,mdd3_4_1/mdd3_5_2,
          mdd3_3_1/mdd3_4_2,mdd3_4_1/mdd3_4_2,mdd3_5_2/mdd3_5_3,mdd3_4_2/mdd3_5_3}
        \draw[line,->] (\u) -- (\v);
      }
    \end{tikzpicture}
    \vspace{-0.2cm}
    \caption{
      \textbf{Examples of MDD.}
    }
    \label{fig:mdd}
  \end{figure}
}

Using \mdd{c}{i} where $\distone{i} \leq c < \costone{i}$, a set of agents interfering with \path{i} can be detected.
See an example in Fig.~\ref{fig:mdd}.
Assume that the current solution is $\path{1}=(v_3, v_3, v_4, v_5)$ and $\path{2}=(v_7, v_4, v_2, v_1)$.
Consider to update \mdd{2}{1} by \path{2};
remove vertices of \mdd{2}{1} that collides of \path{2}, i.e., $(v_4, 1)$.
Then, remove all redundant vertices that do not satisfy the two conditions due to the first operation: ($v_3, 0$) and ($v_5, 2$).
Now it turns out to be impossible that $a_1$ reaches its goal with a cost of $2$ because there is no remaining vertex.
In other words, \path{2} is preventing that $a_1$ achieves a smaller cost; hence \path{1} and \path{2} should be jointly refined.
We describe the general procedure in Algorithm~\ref{algo:identify-by-mdd}.\footnotemark[\value{footnote}]

{
  \begin{algorithm}
    \small
    \caption{\textit{using-MDD}}
    \label{algo:identify-by-mdd}
    \textbf{Input}:~current solution \paths, an agent $a_i \in A$\\
    \textbf{Output}:~modification set $M \subseteq A$\\
    \vspace{-0.4cm}
    \begin{algorithmic}[1]
      \STATE $M \leftarrow \{ a_i \}$
      \FOR{$\distone{i} \leq c < \costone{i}$}
      \STATE create \mdd{c}{i}
      \FOR{$a_j \in A \setminus \{ a_i \}$}
      \STATE update \mdd{c}{i} by \path{j}
      \IFSINGLE{\mdd{c}{i} is updated by \path{j}}{$M\leftarrow M \cup \{ a_j \}$}
      \ENDFOR
      \ENDFOR
      \RETURN $M$
    \end{algorithmic}
  \end{algorithm}
}

\subsection{Using Bottleneck Agent}
\label{subsec:using-bottleneck}
Consider the example of Fig.~\ref{fig:mdd} again; $\path{1}=(v_3, v_3, v_4, v_5)$ and $\path{2}=(v_7, v_4, v_2, v_1)$.
If removing \path{2}, $a_1$ can take a shorter path, meaning that, $a_2$ is a \emph{bottleneck} for $a_1$.
There is a chance to reduce a cost by refining jointly with such a bottleneck agent and agents that can take shorter paths without the agent.
We describe this concept in Algorithm~\ref{algo:identify-by-bottleneck}.\footnotemark[\value{footnote}]
{
  \begin{algorithm}
    \small
    \caption{\textit{using-bottleneck-agent}}
    \label{algo:identify-by-bottleneck}
    \textbf{Input}:~current solution \paths, an agent $a_i \in A$\\
    \textbf{Output}:~modification set $M \subseteq A$\\
    \vspace{-0.4cm}
    \begin{algorithmic}[1]
      \STATE $M \leftarrow \{ a_i \}$
      \FOR{$a_j \in A \setminus \{ a_i \}$}
      \STATE $c \leftarrow \begin{aligned}[t]
        &\text{cost of the best path for}~a_j\\[-0.1cm]
        &\text{while avoiding collisions with}~\paths \setminus \{ \path{i}, \path{j}\}
      \end{aligned}$
      \IFSINGLE{$c < \costone{j}$}{$M\leftarrow M \cup \{a_j\}$}
      \ENDFOR
      \RETURN $M$
    \end{algorithmic}
  \end{algorithm}
}

\footnotetext{In our implementation, an agent $a_i$ is selected sequentially.}

\subsection{Composition}
\label{subsec:composition}
Each rule might have suitable situations, e.g., the rule \textit{focusing-at-goals} (Sec.~\ref{subsec:focus-at-goal}) is costless to create modification sets, but it might be weak to detect effective sets when solutions are already efficient to some extent.
On the other hand, the rule \textit{using-MDD} (\ref{subsec:using-mdd}) takes time but they are highly expected to detect effective sets.
Therefore, one promising direction is to \emph{composite} these rules, namely, execute the first rule until no improvement is expected, and then switch to the second rule; same as above.

{
  \newcommand{\field}[4]{%
    \begin{minipage}{0.28\hsize}%
      \centering%
      \begin{tabular}{ll}
        \begin{minipage}{0.5\hsize}
          \includegraphics[width=1\hsize]{fig/raw/map/#1.pdf}
        \end{minipage}
        \begin{minipage}{0.6\hsize}
          {\tiny\fieldname{#1}\vspace{-0.2cm}\\$#2\stimes#3$\vspace{-0.2cm}\\(#4)}
        \end{minipage}
      \end{tabular}
    \end{minipage}%
  }
  \begin{figure}[t]
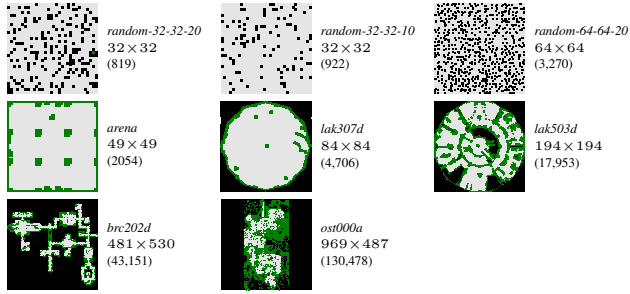

    \centering
    \begin{tabular}{ccc}
      \field{random-32-32-20}{32}{32}{819}
      &
        \field{random-32-32-10}{32}{32}{922}
      &
        \field{random-64-64-20}{64}{64}{3,270}
        \vspace{0.1cm}
      \\
      \field{arena}{49}{49}{2054}
      &
        \field{lak307d}{84}{84}{4,706}
      &
        \field{lak503d}{194}{194}{17,953}
        \vspace{0.1cm}
      \\
      \field{brc202d}{481}{530}{43,151}&
      \field{ost000a}{969}{487}{130,478}&
    \end{tabular}
    \caption{
      \textbf{Used maps with their sizes.}
      $|V|$ is shown with parentheses.
    }
    \label{fig:maps}
  \end{figure}
}
\figwide{inefficient-init_merged.pdf}{result-inefficient-init}{
  \textbf{The average progress of the refinement with \emph{inefficient} initial solutions.}
  The initial solver was \pibtc.
}
\figwide{efficient-init_merged.pdf}{result-efficient-init}{
  \textbf{The average progress of the refinement with \emph{efficient} initial solutions.}
  In \fieldname{random-32-32-20} and \fieldname{arena}, we used ECBS to obtain the initial solutions.
  The suboptimality was $1.2$ and $1.1$ respectively, which were adjusted to balance runtime and solution quality.
  For \fieldname{lak503d}, we prepared well-formed instances and used RPP.
  Note that it is difficult to get such instances with other settings because they are too dense.
  In \fieldname{lak503d}, we do not start the y-axis from one to see differences between rules; the improvements are tiny.
}

\section{Evaluation}
\label{sec:exp}
The experiments consist of six parts:
1)~comparing between the selection rules of agents with \emph{inefficient} initial solutions,
2)~comparing between the rules with \emph{efficient} initial solutions,
3)~evaluation of dependencies on \emph{different initial solvers},
4)~assessing costs compared to the optimal,
5)~comparing with other anytime MAPF solver,
6)~tests in challenging scenarios, i.e., huge fields with many agents.
We often use sum-of-costs divided by $\sum_{a_i \in A} \distone{i}$ as the solution quality;
smaller is better and minimum is one.
Even though optimal costs are hard to obtain, this score works as an upper bound of sub-optimality.

\subsection{Experimental Setup}
We carefully picked up several four-connected grids from~\cite{sturtevant2012benchmarks,stern2019def} as a graph $G$, shown in Fig.~\ref{fig:maps};
they are common in MAPF studies.
In all settings, we tried identical instances between solvers.
All instances were created by choosing randomly initial locations and destinations.

As refinement solver, we used one adapted from ICBS~\cite{boyarski2015icbs} for the following reasons.
First, CBS~\cite{sharon2015conflict} is a promising and actively-studied optimal solver;
it is however sensitive to tie-break of choosing high-level nodes resulting in pure CBS being poorly scalable.
ICBS, an extension of CBS, improves this aspect.
Though not state-of-the-art, ICBS is stable and has been used in many studies, e.g.,~\cite{felner2018adding,li2019improved,stern2019def}.
We thus considered that ICBS was sufficient as a baseline for our experiments.
Note that results heavily depend on the refinement solver, meaning that, the refine speed might become much faster with a faster refinement solver.

In each setting, we introduced the early stop by the timeout (Sec.~\ref{subsec:early-stop});
they were adjusted to appropriate values before experiments.

In the refinement rules \textit{composition} (Sec.~\ref{subsec:composition}), we sequentially used the rules
\textit{local-repair-around-goals} (\ref{subsec:local-repair}),
\textit{focusing-at-goals} (\ref{subsec:focus-at-goal}),
\textit{using-MDD} (\ref{subsec:using-mdd}),
and \textit{random} (\ref{subsec:random}) with 30 agents.
These rules were chosen according to preliminary results.
The switching is when no improvement is achieved for all agents.

Implementations of AFS~\cite{cohen2018anytime} and CBSH~\cite{li2019improved} were obtained from their respective authors;
we used them directly.
The simulator, including ICBS~\cite{boyarski2015icbs}, ECBS~\cite{barer2014suboptimal}, \hca and \whca~\cite{silver2005cooperative}, RPP~\cite{vcap2015prioritized}, PS~\cite{luna2011push}, PIBT~\cite{okumura2019priority}, was developed in C++,
and all experiments were run on a laptop with Intel Core i9 2.3GHz CPU and 16GB RAM.
The code is available on \url{https://kei18.github.io/mapf-IR}.

As a technical point, to obtain a fast and scalable sub-optimal solver with acceptable costs and success rate, we combined two solvers: PIBT and PS.
PIBT, which repeats one-timestep planning for all agents, produces solutions with acceptable costs, however, it sometime fails in one-shot MAPF.
PS solves MAPF in most cases (see~\cite{de2013push} for details) but only allows one agent to move at one timestep, resulting in terrible outcomes compared to the optimal.
Although we compress solutions from PS while preserving temporal dependencies of the solution, inspired by techniques in~\cite{honig2016multi}, they are still too inefficient.
We combine those two as follows.
First, run PIBT until timestep $\max$$_{a_j \in A} \distone{j}$, the minimum timestep needed for the solution.
If some agents are not on their goals at that timestep, taking this configuration as a new initial configuration, then obtain the rest of the solution using PS.
We call this solver \pibtc.
Since most parts of planning are computed by PIBT, we can expect much better outcomes than those of PS.

\subsection{with Inefficient Initial Solutions}
The first experiment aims at assessing how each rule refines inefficient initial solutions.
The initial solver was \pibtc.
The refinements were stopped after \SI{90}{\second} in the small fields (\fieldname{random-32-32-20} and \fieldname{arena}) and \SI{10}{\minute} in \fieldname{lak503d}.
The numbers of agents were fixed to $110$, $300$, and $500$, respectively.
This duration includes the time required for the initial solver.
The refinement timeout was \SI{1}{\second} for \fieldname{lak503d}, otherwise \SI{500}{\ms}.

Fig.~\ref{fig:result-inefficient-init} shows the average progress of the refinement over $25$ instances.
The rules \textit{single-agent} and \textit{local-repair-around-goals} reduce costs immediately but soon reach their limits, i.e., no improvement even with room for refinement.
The rule \textit{focusing-at-goals} dramatically improves solution quality in each case while the rule \textit{use-bottleneck-agent} does not work well as expected.
Note that \pibtc returned solutions within \SI{500}{\ms} even for the worst case (in \fieldname{lak503d} with 500 agents; see also Table~\ref{table:result-diff-init}).

\subsection{with Efficient Initial Solutions}
Next, we tested the refinement with already efficient solutions to some extent, obtained by ECBS or RPP.
The used settings were the same as the previous.

Fig.~\ref{fig:result-efficient-init} shows the results, which reveals a limitation of the rule \textit{focusing-at-goals} in \fieldname{arena} and \fieldname{random-32-32-20};
it is difficult to refine efficient enough solutions by this rule.
Rather, the rules \textit{using-MDD} and \textit{random} achieve smaller final costs.
In \fieldname{lak503d}, we often obtained initial solutions with little room for refinement, and the effect of refinement is subtle (see y-axis).
Even so, the several rules still improve the solution quality.

Throughout two experiments so far, the rule \textit{composition} successfully reduced costs with reasonable speeds;
we use this rule hereinafter.

\subsection{with Different Initial Solvers}
The third experiment evaluates dependencies to initial solvers.
We used five initial solvers: \pibtc, \hca, \whca, ECBS, and RPP.
The timeout was set \SI{1}{\second} for \fieldname{lak503d}, otherwise \SI{500}{\ms}.

Fig.~\ref{fig:result-diff-init} shows the average progress.
Table~\ref{table:result-diff-init} summarizes the details;
``cost'' is sum-of-costs divided by the lower bound.
We show both initial and last scores.
``\#success'' is the number of successful instances in $25$ instances.
Some solvers failed in some instances because they return failure due to incompleteness, or, fail to obtain solutions before the deadlines (\SI{90}{\second} in \fieldname{random-64-64} and \fieldname{lak307d}; \SI{10}{\minute} in \fieldname{lak503d}).
``runtime'' is when initial solvers return solutions.
All scores are average over success instances in all initial solvers except for \fieldname{lak503d} where \whca failed most cases; we show the average scores without \whca for \fieldname{lak503d}.

The main observation is that, although the initial costs are widely different between the solvers, the final costs end up not so.
This implies that any initial solvers can be used if you have enough time for the refinement.
In the following, we use \pibtc because it instantly returns a feasible solution and meets well with anytime property.

\subsection{v.s. Optimal Solutions}
According to Proposition~\ref{prop:no-bound}, the approximation ratio of refined solutions is unbounded from the optimal.
In practice, however, the estimation from empirical data is useful hence we evaluate this.
We used two small settings (30~agents in \fieldname{random-32-32-20}; 50~agents in \fieldname{random-32-32-10}) because optimal solvers often fail to obtain solutions within a reasonable time in large fields or with many agents.
Optimal solutions were obtained by CBSH.
The refinements continued for \SI{1}{\second} including the time for the initial solver.

\figwide{diff-init_merged.pdf}{result-diff-init}{
  \textbf{The average progress of the refinement with \emph{different initial solvers}.}
  All instances were well-formed.
  The averages are for succcess instances in all initial solvers.
  In \fieldname{lak503d}, the scores were calculated removing those of \whca because \whca failed most cases.
  From the left, the suboptimality of ECBS was $1.05$, $1.05$, and $1.1$.
  The window size of \whca was $30$, $10$, $30$.
  Those parameters were adjusted to balance success rate, cost, and runtime.
}
{
  \setlength{\tabcolsep}{1mm}
  \begin{table}
    \centering
    \caption{The detailed results with different initial solvers.}
    \label{table:result-diff-init}
    \vspace{-0.2cm}
    \scriptsize
    \begin{tabular}{rrrrrrr}
      \toprule
      && \pibtc & \hca & \whca & ECBS & RPP
      \\ \midrule
      & cost (init) & 1.219 & 1.069 & 1.096 & 1.037 & 1.035
      \\
      \fieldname{random-64-64-20} & cost (last) & 1.015 & 1.015 & 1.015 & 1.014 & 1.016
      \\
      300 agents & \#success & 25 & 23 & 15 & 25 & 25
      \\
      & runtime (ms) & 43 & 201 & 228 & 3436 & 153
      \\ \midrule
      & cost (init) & 1.190 & 1.021 & 1.155 & 1.007 & 1.007
      \\
      \fieldname{lak307d} & cost (last) & 1.003 & 1.003 & 1.003 & 1.003 & 1.003
      \\
      300 agents & \#success & 25 & 25 & 23 & 25 & 25
      \\
      & runtime (ms) & 43 & 171 & 247 & 2306 & 119
      \\ \midrule
      & cost (init) & 1.148 & 1.021 & - & 1.026 & 1.019
      \\
      \fieldname{lak503d} & cost (last) & 1.019 & 1.018 & - & 1.018 & 1.019
      \\
      500 agents & \#success & 25 & 25 & 1 & 24 & 25
      \\
      & runtime (ms) & 376 & 5716 & - & 54433 & 5640
      \\ \bottomrule
    \end{tabular}
  \end{table}
}

\fighalf{compare_optimal_merged.pdf}{result-compare-optimal}{%
  \textbf{The results v.s. \emph{optimal} solutions.}
  We show the suboptimality of 50 instances with initial scores, \SI{0.1}{\second} later, and at \SI{1}{\second}.
  The scores at \SI{1}{\second} is hard to recognize because most of them reach the optimal.
}

Fig.~\ref{fig:result-compare-optimal} summarizes the results of $50$ instances, showing the sum-of-costs divided by the optimal costs.
The average runtime of CBSH was \SI{710}{\ms} with $30$~agents and \SI{1743}{\ms} with $50$~agents while the refinement (\pibtc) got initial solutions less than \SI{3}{\ms} in all instances.
Despite large gaps between the initial and optimal costs, the refinement dramatically reduces the gaps within a short time.
Furthermore, most solutions reach the optimal within \SI{1}{\second}.

\subsection{v.s. Other Anytime MAPF Solver}
We next compared the proposal with another anytime MAPF solver, AFS, using \fieldname{random-32-32-20} while changing the number of agents.
Note that AFS theoretically converges the optimal someday but the proposal may not.
The refinement timeout was \SI{100}{\ms}.
We run both algorithms for \SI{30}{\second}.

\fighalf{compare_anytime_merged.pdf}{result-compare-anytime}{%
  \textbf{The results v.s. another \emph{anytime} MAPF solver.}
  We omit scores after \SI{10}{\second} because they are almost flat.
  The improvements of AFS are subtle and hard to recognize.
}

Fig.~\ref{fig:result-compare-anytime} shows the results of $25$ instances.
AFS failed to obtain solutions within the time for 2 instances with 90 agents.
Clearly, the proposal has an advantage;
it obtains initial solutions immediately while AFS does not, and the convergence is fast enough with better costs.

\subsection{Challenging Scenarios}
\label{subsec:challenging}
Finally, we tested the refinement with many agents in huge grids, namely, $1500$~agents in \fieldname{brc202d} and $3000$~agents in \fieldname{ost000a}.
The refinement timeout was \SI{3}{\second} for \fieldname{brc202d} and \SI{10}{\second} for \fieldname{ost000a}.
In such scenarios, it is unrealistic to obtain the optimal solutions;
hence, the iterative refinement is attractive to obtain good enough solutions.

\fighalf{challenging_merged.pdf}{result-challenging}{%
  \textbf{The results of challenging scenarios.}
}

Fig.~\ref{fig:result-challenging} shows the progress of $10$~instances with one-hour refinement.
The initial solutions were obtained \SI{4}{\second} for \fieldname{brc202d} and \SI{17}{\second} for \fieldname{ost000a} on average.%
The refinement gradually reduces costs; however, the speed of the refinement is not so fast.

\section{Discussion and Conclusion}
\label{sec:discussion}
This paper presented the iterative refinement of path finding for multiple robots.
The proposal uses two MAPF solvers as sub-procedures: a sub-optimal solver to obtain an initial solution and an optimal solver to refine the solution.
Although the framework does not guarantee to find the optimal solution, the empirical results demonstrate its usefulness, i.e., the framework finds a solution with acceptable costs in a small computation time with high scalability.
Furthermore, it is anytime planning; a desired property for real-time systems with severe deadlines.

According to the experiments, the cost is reduced to near-optimal regardless of initial solutions; however, it is better to start with efficient enough solutions if available, because we can get better solutions at an early stage.
Thus, a practical anytime MAPF scheme will be the following.
First, in parallel, start several initial solvers with different portfolios between runtime and solution quality (e.g., \pibtc and RPP).
Then, apply refinement to the first solution you get.
If another initial solver gets a better solution compared to the refined solution at that time, replace the current one with the new one.
This scheme complements a time lag of an efficient initial solver by a fast inefficient initial solver.

As future directions, we describe the following two:
1) In the experiments, the rule \textit{composition}, combining several rules with different features, was successful.
The rule \textit{composition} itself is reasonable, however, since its components were chosen empirically, an automatic selection of such rules depending on situations might be promising.
2) In challenging scenarios (Sec.~\ref{subsec:challenging}), the refinement happens gradually; not so fast.
Developing appropriate rules to achieve fast refinement for such scenarios is remaining.

MAPF studies are very active and our proposal can be better with their developments.
In particular, we used ICBS in our experiments as the refinement solver but many studies enhancing CBS, e.g.,~\cite{felner2018adding,li2019symmetry,zhang2020multi}, or, there are other promising optimal solvers~\cite{sharon2013increasing,wagner2015subdimensional,lam2019branch}.
This is the same for sub-optimal solvers as the initial solver, e.g.,~\cite{ma2019searching,okumura2019winpibt,han2020ddm,li2021eecbs}.
With their effective use, we expect that the framework becomes better than presented here.

\section*{Acknowledgment}
We are grateful to Fran\c{c}ois Bonnet for his comments on the initial manuscript.
We would like to thank Liron Cohen and Jiaoyang Li for sharing with us their implementation of AFS and CBSH, respectively.
This work was partly supported by JSPS KAKENHI Grant Numbers~20J23011, 21K11748, and 21H03423.
Keisuke Okumura thanks the support of the Yoshida Scholarship Foundation.

\bibliographystyle{IEEEtran}
\bibliography{ref}

\end{document}